\documentclass[twoside]{article}

\usepackage{PRIMEarxiv}

\usepackage[utf8]{inputenc} % allow utf-8 input
\usepackage[T1]{fontenc}    % use 8-bit T1 fonts
\usepackage{hyperref}       % hyperlinks
\usepackage{url}            % simple URL typesetting
\usepackage{booktabs}       % professional-quality tables
\usepackage{amsfonts}       % blackboard math symbols
\usepackage{nicefrac}       % compact symbols for 1/2, etc.
\usepackage{microtype}      % microtypography
\usepackage{lipsum}
\usepackage{fancyhdr}       % header
\usepackage{graphicx}       % graphics
\graphicspath{{media/}}     % organize your images and other figures under media/ folder
\usepackage{amsmath, amsthm}
\usepackage{threeparttable}

\newtheorem{proposition}{Proposition}

\title{CALIPER: Metric-Grounded Model-Free Recognition of Visually Similar Industrial Parts
}

\author{
  Alankrit Gupta, Chenxi Tao, Seung-Kyum Choi \\
  George W. Woodruff School of Mechanical Engineering \\
  Georgia Institute of Technology \\
  Atlanta, GA, U.S.A.\\
  \texttt{\{agupta3129@, ctao40@, schoi@me.\}gatech.edu} \\
}

\begin{document}
\maketitle

\begin{abstract}
Fine-grained recognition of visually similar industrial parts is challenging when classes differ primarily in physical dimensions. Normalizing detected object crops to a fixed input size suppresses absolute scale, while CAD models and large class-specific datasets may be unavailable in evolving industrial inventories. We present CALIPER, a model-free RGB-D framework that couples support-based appearance matching with metric size evidence. Each training class is onboarded from a single turntable RGB-D video and one to two labeled real images; 3D reconstruction provides novel-view appearance support, while aligned depth yields a class-specific metric size profile. At inference, a coarse YOLOv8n-seg model localizes parts, and a frozen DINOv2 backbone with an episodically trained embedding head performs fine-grained support matching. Margin-conditioned metric fusion activates probabilistic size evidence only for appearance-ambiguous decisions. New classes are enrolled from a small RGB-D support set without updating network parameters. We evaluate CALIPER on 18 visually similar industrial parts: 16 classes are used for training, while two screws are reserved for training-free enrollment. CALIPER achieves 88.2\% closed-set accuracy with 99.8\% localization recall and 85.7\% overall accuracy after 10-shot enrollment of the two unseen screws. Metric fusion improves unseen-class accuracy by up to 37.4 percentage points without statistically significant degradation of the original inventory. Robot-arm deployment identifies 17/18 parts without deployment-specific retraining.

\label{sec:00_abstract}
\end{abstract}

% keywords can be removed
\keywords{Computer Vision for Automation \and RGB-D Perception}

\section{Introduction}
\label{sec:01_intro}

\begin{figure*}[t]
    % \vspace{1.5 mm}
    \centering
    \includegraphics[width=\linewidth]{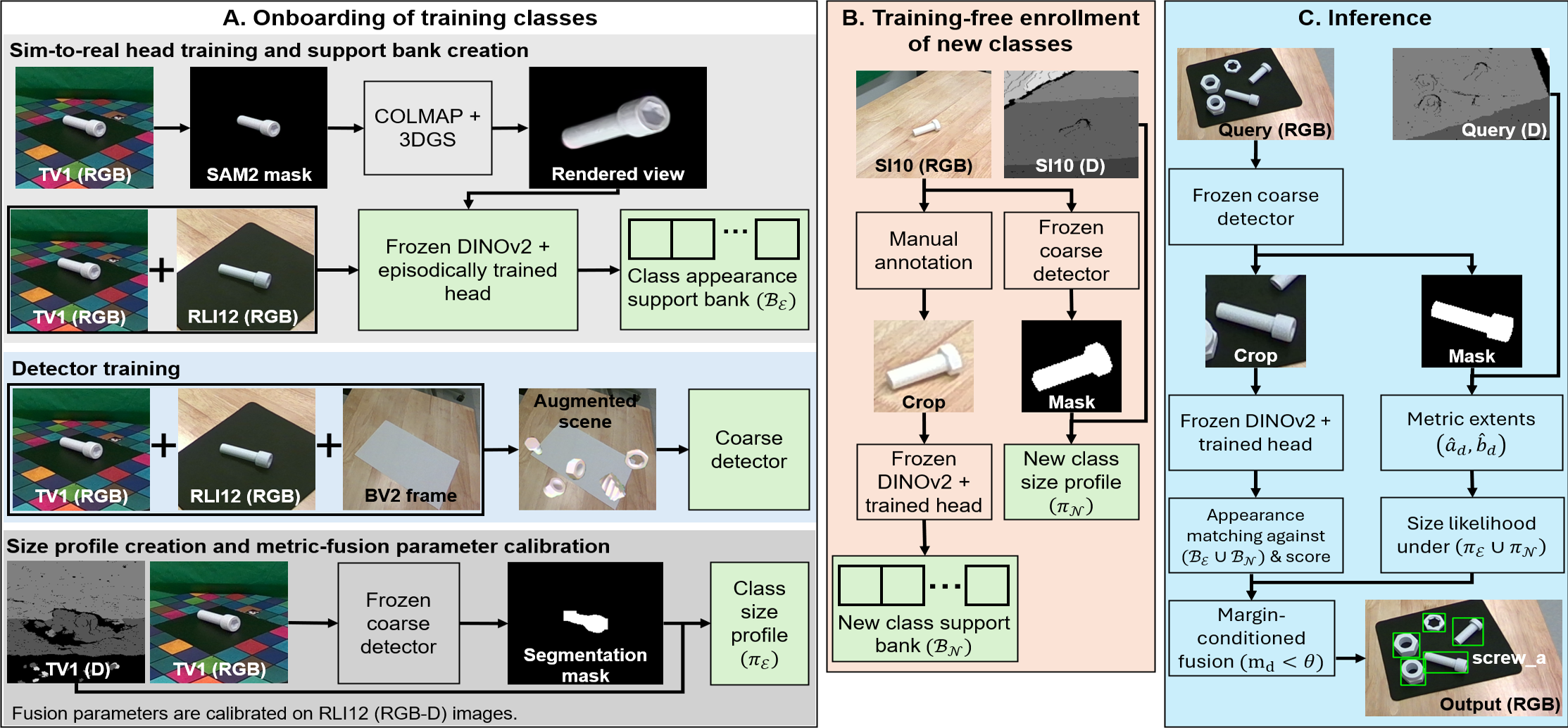}
    % \vspace{-6 mm}
    \caption{Overview of CALIPER. (A) Training classes are onboarded to build appearance support banks, a coarse detector, and class-specific metric size profiles; limited real data provide sim-to-real supervision and calibrate the fusion parameters. (B) An unseen class is enrolled from \textsc{SI10} RGB-D images by forming a new support bank and live size profile, without retraining. (C) At inference, the coarse detector provides object crops and masks for appearance matching and metric measurement, respectively. Existing and newly enrolled banks/profiles are combined, and margin-conditioned fusion produces the final fine-grained identity. Here, $\mathcal{E}$ and $\mathcal{N}$ denote existing and newly enrolled class sets; \textsc{TV1}, \textsc{RLI12}, \textsc{BV2}, \textsc{SI10} are different data tiers explained in Sec.~\ref{sec:03_dataset}.}
    \label{fig:pipeline}
    % \vspace{-5 mm}
\end{figure*}

Reliable object identification is fundamental to robotic manipulation, whether a humanoid sorts industrial components or an arm retrieves the correct instrument during robot-assisted surgery. Localization alone is insufficient if visually similar parts can still be confused. Modern detectors and fine-grained classifiers perform well when classes have distinct visual cues~\cite{NIPS2015_14bfa6bb,wei2022finegrained}, but industrial inventories often contain screws, nuts, gears, and other variants that deliberately share geometry, material, texture, and color. Distinguishing such parts can therefore depend on physical dimensions rather than appearance alone.

This setting introduces three practical challenges. First, standard detection pipelines crop and resize objects to a common input size, suppressing absolute scale and making dimensionally different parts appear similar in the visual feature space. Second, CAD models required by many model-based recognition pipelines may be unavailable or costly to maintain. Third, collecting extensive class-specific training data is expensive, particularly when an inventory evolves and new variants must be introduced after deployment. A practical recognizer should therefore operate from limited onboarding data, exploit metric information when appearance is ambiguous, and admit new classes without retraining the deployed recognition model.

Recent advances have reduced some of these requirements. Foundation-model features enable support- or template-based recognition with little task-specific training~\cite{oquab2024dinov2,nguyen2023cnos}, while model-free onboarding can replace CAD models with short reference videos~\cite{nguyen2025bopchallenge2024modelbased}. However, these approaches primarily exploit appearance and geometry for recognition, localization, or pose estimation, and are not designed for fine-grained variants whose normalized visual appearance is nearly identical. Prior work has also shown that real-world object size can aid recognition~\cite{Janoch_2011_ICCV,Xiaofei_2021_Arxiv,baghooee2026insect}, but size evidence is typically treated independently of support-based class enrollment.

This leaves an important gap for evolving industrial inventories: a new class should be enrollable from a small RGB-D support set without retraining, while carrying both an appearance representation and a metric identity profile. Moreover, metric evidence should complement rather than indiscriminately perturb appearance-based decisions, and the effect of adding new classes on the already deployed ones should be measured and bounded.

We propose CALIPER, a model-free RGB-D recognition framework that couples support-based appearance matching with metric size evidence (Fig.~\ref{fig:pipeline}). Each training class is onboarded from a single turntable RGB-D video and one to two labeled real images, with 3D reconstruction providing dense novel-view appearance support and aligned depth a class-specific metric size profile. At inference, a coarse YOLOv8n-seg model localizes candidate parts, and a frozen DINOv2 backbone with a lightweight episodically trained embedding head performs fine-grained support matching. CALIPER then applies \emph{margin-conditioned metric fusion}: depth-derived size likelihoods influence the prediction only when the pre-fusion appearance margin is small, allowing metric evidence to resolve ambiguous variants while preserving confident appearance decisions.

The same representation enables training-free enrollment. A new fine-grained class within an existing coarse category is added from a small RGB-D image set by forming its appearance support pool and live size profile, without updating network parameters. Because enrollment expands the competing label set, CALIPER also explicitly measures its interference with previously deployed classes.

We evaluate CALIPER on a controlled RGB-D benchmark of $18$ uniformly colored 3D-printed parts spanning gears, nuts, and screws. $16$ classes are used for training, while two visually similar screws are held out entirely as enrollment probes; crop normalization makes their appearance nearly identical, while their physical dimensions remain distinct.

Our contributions are:
\begin{itemize}
    \item a low-data, CAD-free onboarding pipeline that derives both appearance support and metric size profiles from RGB-D captures;
    \item margin-conditioned probabilistic fusion of appearance and metric evidence, allowing physical size to influence only appearance-ambiguous decisions;
    \item training-free enrollment of unseen fine-grained classes, with explicit measurement and a bound on appearance-channel interference with the deployed inventory;
    \item a controlled RGB-D benchmark for visually similar industrial parts, complemented by robot-arm deployment.
\end{itemize}

CALIPER reaches $88.2\%$ closed-set accuracy with $99.8\%$ localization recall. After enrolling the two unseen screws, it achieves $85.7\%$ overall accuracy with 10-shot support, with metric fusion improving the accuracy of unseen screws by up to $37.4$ percentage points (pp) without statistically significant degradation of the original inventory. In robot-arm deployment, the frozen system correctly identifies $17/18$ parts across three scenes with no retraining.
\section{Related Work}
\label{sec:02_background}

\begin{figure*}[t]
    % \vspace{2 mm}
    \centering
    \includegraphics[width=\linewidth]{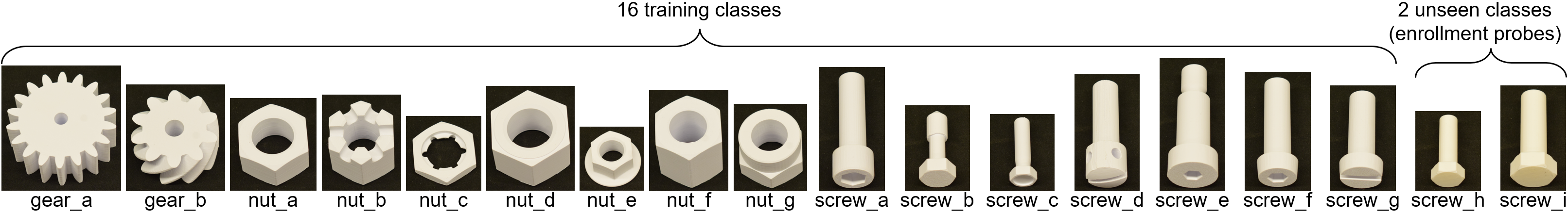}
    % \vspace{-6 mm}
    \caption{The benchmark includes 18 classes across three coarse categories -- two gears, seven nuts, and nine screws. \texttt{screw\_h} (M14) and \texttt{screw\_i} (M20) are the two unseen enrollment probes; they appear nearly identical in the visual feature space but carry distinct sizes.}
    \label{fig:obj_inventory}
    % \vspace{-6 mm}
\end{figure*}

\textbf{Foundation models and low-data recognition.}
Conventional supervised detectors and fine-grained classifiers require class-specific annotated data~\cite{NIPS2015_14bfa6bb,wei2022finegrained}. Although foundation models are pretrained at substantial data and compute scale, their representations can be reused without retraining the backbone. CLIP~\cite{radford2021clip} has enabled open-vocabulary detectors such as OWL-ViT~\cite{minderer2022owlvit} and Grounding DINO~\cite{liu2024groundingdino}, while DINOv2~\cite{oquab2024dinov2} provides transferable embeddings and SAM2~\cite{ravi2025sam} class-agnostic masks. Therefore, these models are attractive in industrial settings when labeled data are scarce and task-specific adaptation must remain lightweight.

\textbf{Few-shot classification and extensibility.}
Few-shot methods classify novel classes from limited support data. Matching Networks~\cite{vinyals2016matching} and ProtoNet~\cite{snell2017prototypical} operate on support representations, whereas MAML~\cite{finn2017maml} learns parameters for rapid adaptation; strong embeddings with simple classifiers can also be competitive~\cite{chen2019closerlook,tian2020rethinking}. Support-based classifiers can therefore add classes without modifying the feature extractor. By contrast, few-shot and incremental object detectors must also localize new classes and typically adapt detector parameters as the class set changes~\cite{wang2020tfa,qiao2021defrcn,Perez-Rua_2020_CVPR}. These works mainly study visually distinct classes rather than fine-grained identities within a coarse category, and rarely quantify interference with an existing inventory.

\textbf{Model-based and model-free object onboarding.}
CAD-based novel-object methods render templates and match them using pretrained features, as in CNOS, SAM-6D, GigaPose, and FoundPose~\cite{nguyen2023cnos,Lin_2024_CVPR,Nguyen_2024_CVPR,ornek2024foundpose}. The BOP Challenge more recently introduced model-free tasks that register unseen objects from short reference videos~\cite{nguyen2025bopchallenge2024modelbased}. OnePose~\cite{sun2022onepose} and Gen6D~\cite{liu2022gen6d} use reference images without target CAD models, while 3D reconstruction with COLMAP~\cite{schoenberger2016sfm}, NeRF~\cite{Mildenhall_2020_ECCV_NeRF}, and 3D Gaussian Splatting~\cite{KKLD23} builds dense object representations from limited captures. These methods largely target localization and pose estimation; concurrent CAD-free recognition instead fuses scale-normalized 3D shape prototypes with DINOv2 using fixed-weight fusion~\cite{tao2026appearancefailsgeometryrecognizes} but does not preserve metric scale for fine-grained identification.

\textbf{Metric size as identity evidence.}
RGB-D recognition has long used depth as a visual or geometric representation~\cite{Gupta_2014_ECCV,Eitel_2015_IROS}, usually without treating measured dimensions as explicit class evidence. Prior size-aware approaches use real-world size as semantic information~\cite{Bagherinezhad_2016_AAAI}, class-specific RGB-D size distributions for filtering and rescoring~\cite{Janoch_2011_ICCV}, size-based class exclusion~\cite{Xiaofei_2021_Arxiv}, qualitative reasoning for uncertain predictions~\cite{Chiatti_2021_AAAI_MAKE}, or probabilistic combination of size distributions with visual scores~\cite{baghooee2026insect}. However, they neither tie metric profiles to training-free support enrollment nor condition class-specific probabilistic size evidence on the appearance decision margin.

CALIPER builds on these directions through model-free video onboarding and sim-to-real support adaptation, while introducing support-derived metric profiles and margin-conditioned fusion. Metric evidence accompanies class enrollment and is invoked only for appearance-ambiguous decisions. CALIPER supports training-free enrollment and evaluates its interference with deployed classes.
\section{Dataset \& Problem Setting}
\label{sec:03_dataset}

We introduce a controlled benchmark of $18$ 3D-printed industrial parts. The 16-class training inventory contains two gears, seven nuts, and seven screws; two additional screws are reserved as \emph{unseen enrollment probes}. The M14- and M20-style screws differ in dimensions by approximately a factor of $1.4$, yet crop normalization makes their appearance nearly identical in the visual feature space. They therefore form a controlled \emph{scale-degenerate} pair for testing whether metric evidence can recover identity when appearance is ambiguous. Fig.~\ref{fig:obj_inventory} shows the complete inventory.

\begin{figure*}[t]
    % \vspace{2 mm}
    \centering
    \includegraphics[width=\linewidth]{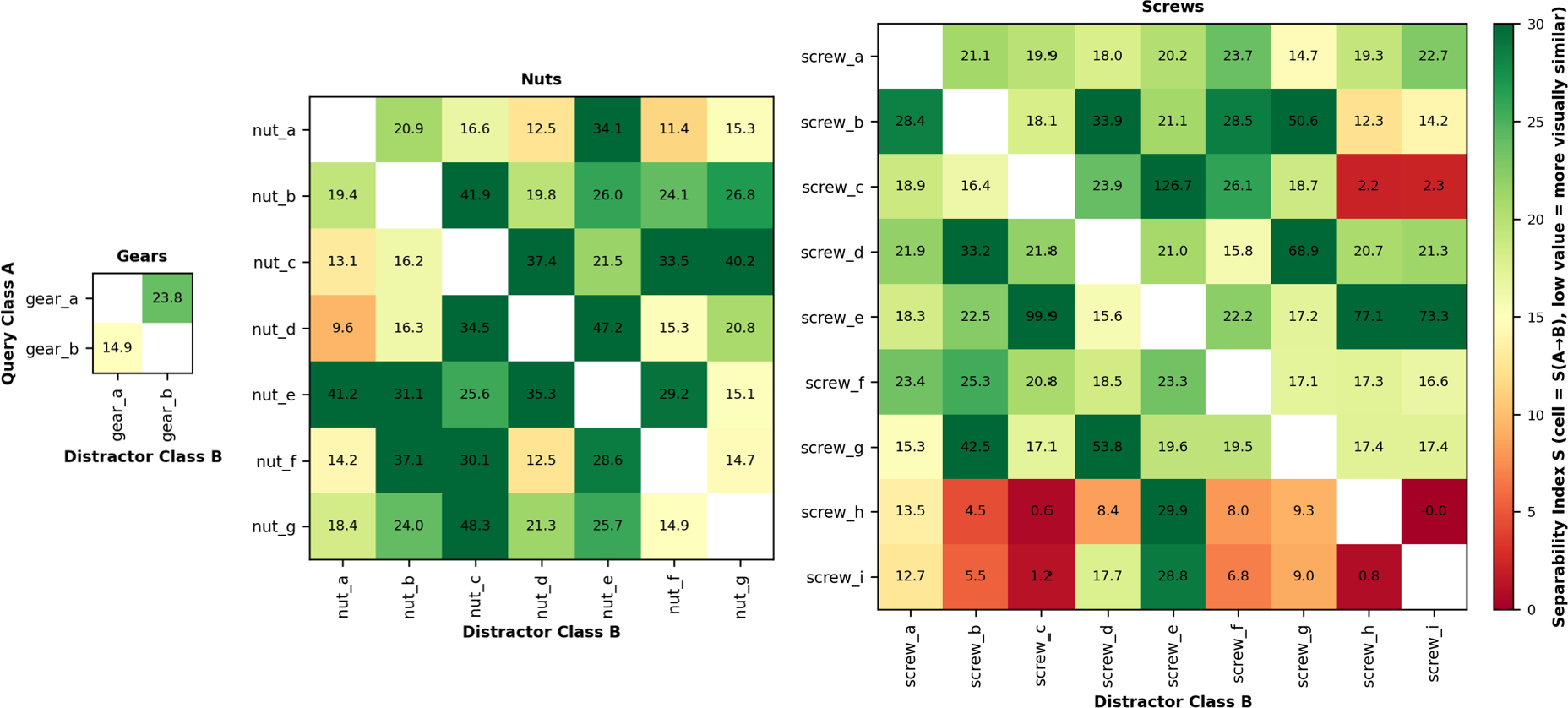}
    % \vspace{-8 mm}
    \caption{Directional separability among fine-grained classes within three coarse categories -- gears, nuts, and screws. Each cell $(A,B)$ reports $S(A{\rightarrow}B)$, measuring how well class $A$ is separated from class $B$ in the head-embedding space using onboarding ($16$ training classes) and enrollment ($2$ unseen screws) data. Lower values indicate greater visual similarity and higher confusion potential; diagonal cells are empty because self-separability is undefined.}
    \label{fig:sep_analysis}
    % \vspace{-6 mm}
\end{figure*}

\subsection{Separability Analysis}
\label{sec:03_dataset_a}

We quantify appearance difficulty using inference-time support data only, without accessing the \textsc{Test} set. Because CALIPER scores each class by maximum cosine similarity to its support pool, we define a directional separability index on the same decision variable. For a support crop $r$ of class $A$, let $s_{\mathrm{own}}(r)$ be its maximum similarity to the remaining crops of $A$ under leave-one-out evaluation, and $s_{\mathrm{other}}(r)$ its maximum similarity to class $B$. We define
% \vspace{-1 mm}
\begin{equation*}
    m(r) = s_{\mathrm{own}}(r) - s_{\mathrm{other}}(r), \qquad \quad S(A{\to}B) = \mu_m / \sigma_m,
    % \vspace{-1 mm}
\end{equation*}
where $\mu_m$ and $\sigma_m$ are the mean and standard deviation of $m(r)$ over class $A$. Thus, lower $S(A{\to}B)$ indicates greater appearance ambiguity between $A$ and $B$.

Fig.~\ref{fig:sep_analysis} reports $S$ separately for each coarse category. Most pairs are well separated, whereas the enrollment probes form the extreme case, with $S(\texttt{screw\_h}\to\texttt{screw\_i})\approx0$. A probe view is about as similar to the competing probe as to its own class. $S$ provides an \emph{a priori} appearance-difficulty order rather than a direct Test-set confusion estimate, because deployment introduces competition among all deployed classes as well as changes in pose, illumination, and occlusion.

\subsection{Data Tiers}
\label{sec:03_dataset_b}

All data are captured with an Intel RealSense D455 camera at $1280\times720$, $30$\,fps, with depth aligned to color. There are four data tiers and a disjoint evaluation set in the benchmark.

\textbf{\textsc{TV1}}. One turntable RGB-D video per training class ($\approx60$\,s, two revolutions), acquired with a static camera and a rotating object, providing appearance reconstruction and metric size profiles.

\textbf{\textsc{RLI12}}. 1--2 polygon-annotated real images per training class ($22$ total). These are the only manually labeled real images used in training, serving detector fine-tuning, sim-to-real learning, fusion calibration, and optional real support.

\textbf{\textsc{BV2}}. Two empty-workcell videos providing backgrounds for copy-paste augmentation; no annotation is required.

\textbf{\textsc{SI10}}. Ten casually captured RGB-D images per probe across varied viewpoints, distances, and illumination, each with one polygon annotation. They provide appearance support and live size profiles for enrollment.

\textbf{\textsc{Test}}. $186$ RGB-D workcell images containing $558$ annotated instances across all $18$ classes, including $30$ instances per probe. Scenes range from isolated objects to cluttered mixtures. The set is disjoint from all onboarding, training, calibration, and enrollment data.

\subsection{Task}
\label{sec:03_dataset_c}

Given a \textsc{Test} RGB-D image, the system must localize every visible part and assign its fine-grained class from the deployed inventory. We evaluate 16-class closed-set recognition and 18-class recognition after training-free enrollment of the two probes.
\section{Method}
\label{sec:04_method}

CALIPER targets fine-grained recognition of visually similar industrial parts within coarse categories such as screws, nuts, and gears, where physical dimensions provide complementary evidence to appearance. Each deployed class is therefore represented by both appearance support and a metric size profile derived from its onboarding or enrollment data. This representation also allows new classes to be added without retraining the fine-grained recognizer. Inference proceeds in two stages: a coarse instance segmenter localizes candidate objects (Sec.~\ref{sec:04_method_b}), after which each detection is identified by combining appearance matching (Sec.~\ref{sec:04_method_c}) and depth-derived metric evidence (Sec.~\ref{sec:04_method_d}) through margin-conditioned fusion (Sec.~\ref{sec:04_method_e}).

\subsection{Model-Free Onboarding of Training Classes}
\label{sec:04_method_a}

Each training class is onboarded from one \textsc{TV1} RGB-D video, from which we extract $300$ frames with aligned depth and camera intrinsics. SAM2~\cite{ravi2025sam}, prompted by a temporal-variance prior over the turntable region, provides object masks for reconstruction.

Camera poses are estimated using COLMAP~\cite{schoenberger2016sfm}, with feature extraction restricted to the rotating platform, and a 3D Gaussian Splatting model~\cite{KKLD23} is optimized from the masked sequence. We render $360$ novel views spanning three elevation bands and $120$ azimuths per band. Rendered views and real images undergo the same preprocessing (masking, background normalization, cropping, and resizing) before feature extraction. The appearance bank for class $c$ is $\mathcal{B}_c=\{g_\phi(f(v)):v \in \mathcal{V}_c\},$ where $\mathcal{V}_c$ denotes its rendered and real views, $f$ is the frozen DINOv2 backbone and $g_\phi$ is the learned embedding head.

The same RGB-D sequence also provides physical-size measurements. For each frame, detector-masked depth pixels are back-projected using the camera intrinsics under the pinhole camera geometry, and principal component analysis of the resulting planar point set yields major and minor extents $(a_i,b_i)$. The extents are modeled in log space, giving the class size profile as $\pi_c=\left(\mu_a^c,\sigma_a^c,\mu_b^c,\sigma_b^c\right)$, where each $\mu$ and $\sigma$ is the sample mean and standard deviation of the corresponding log-extent. In addition to \textsc{TV1}, \textsc{RLI12} provides limited real-domain supervision.

\subsection{Coarse Object Localization}
\label{sec:04_method_b}

CALIPER separates object localization from fine-grained identification. A lightweight YOLOv8n-seg model~\cite{yolov8_ultralytics} is fine-tuned only for coarse categories screw, nut, and gear, allowing an unseen fine-grained class to be localized under an existing coarse category without retraining the detector. We extract about $130$ background frames from \textsc{BV2} and use copy-paste augmentation~\cite{Ghiasi_2021_CVPR} to place segmented \textsc{TV1} and \textsc{RLI12} objects onto these real backgrounds. Each detection $d$ provides a crop $x_d$ for appearance matching and a segmentation mask $M_d$ for metric measurement.

\subsection{Head Training and Appearance Scoring}
\label{sec:04_method_c}

Each object crop is encoded by the CLS token of a frozen DINOv2 ViT-B/14 backbone. A lightweight two-layer MLP head $g_\phi:\mathbb{R}^{768} \rightarrow \mathbb{R}^{512} \rightarrow \mathbb{R}^{256}$ maps the backbone feature to an L2-normalized embedding. It is the only trainable component of the fine-grained recognition stage.

The head is trained episodically in a few-shot meta-learning framework to bridge the domain gap between reconstructed views and real workcell images. In each episode, support samples are drawn from rendered views, while queries are drawn from \textsc{RLI12} and $25$ real frames sampled per class from \textsc{TV1}. A soft nearest-neighbor objective pulls real queries toward same-class rendered supports while separating competing classes. These \emph{sim-to-real episodes} reproduce the deployment setting in which rendered class supports are matched against real detections. The trained head is used at the same support budget $K$ as used in training.

At inference, class $c$ is represented by a support pool $\mathcal{P}_c$ drawn from its onboarding or enrollment data according to the support budget. Its appearance score for a detection $d$ is
% \vspace{-2 mm}
\begin{equation*}
    s_c(d) = \max_{p\in\mathcal{P}_c} \left\langle g_\phi(f(x_d)),p\right\rangle .
    % \vspace{-1 mm}
\end{equation*}
All embeddings are L2-normalized, so the inner product is cosine similarity. Let $m_d$ denote the \emph{appearance margin} as the absolute difference between the two largest class scores; smaller values indicate greater appearance ambiguity.

\subsection{Metric Measurement and Size Likelihood}
\label{sec:04_method_d}

For each detection, mask $M_d$ is back-projected through the aligned depth using the same geometric measurement procedure used to construct the size profiles, yielding major and minor extents $(\hat a_d,\hat b_d)$. Using a consistent segmentation and measurement convention for profiles and queries prevents systematic mask or depth biases from being interpreted as class-size differences.

Under class profile $\pi_c$, the metric log-likelihood is
% \vspace{-1 mm}
\begin{equation*}
\ell_c(d) = -\frac{1}{2}\left(\frac{\log\hat a_d-\mu_a^c}{\sigma_a^c}\right)^2 - \log\sigma_a^c - \frac{1}{2}\left(\frac{\log\hat b_d - \mu_b^c}{\sigma_b^c}\right)^2 - \log\sigma_b^c
% \vspace{-1 mm}
\end{equation*}

The two extents are modeled independently in log space, so uncertainty reflects relative size variation rather than absolute deviations, while ensuring positive physical dimensions. Each standard deviation is lower-bounded by $0.06$ to avoid unrealistically sharp class distributions.

Normalizing across the deployed class set $\mathcal{C}$ gives
% \vspace{-1 mm}
\begin{equation*}
    L_c(d) = \ell_c(d) - \log \sum_{c' \in \mathcal{C}} \exp\left(\ell_{c'}(d)\right),
    % \vspace{-1 mm}
\end{equation*}
which is the class log-posterior under a uniform prior. If metric measurement is unreliable, $L_c(d)$ is made constant across classes and therefore does not affect the prediction.

\subsection{Margin-Conditioned Metric Fusion}
\label{sec:04_method_e}

Metric evidence is activated only when the pre-fusion appearance margin $m_d$ falls below $\theta$. The final score is
% \vspace{-1 mm}
\begin{equation*}
    \mathrm{score}_c(d) = s_c(d) + \lambda \left(\mathbb{1} [m_d<\theta]\right) L_c(d),
    % \vspace{-1 mm}
\end{equation*}
Thus, metric fusion can resolve visually ambiguous classes with distinct sizes, while confident appearance predictions are left unchanged. The fusion weight $\lambda$ and ambiguity threshold $\theta$ are calibrated once using the real labeled images from \textsc{RLI12} and then frozen for all test evaluations. We use $\theta=0.12$ and $\lambda=0.0173$ throughout.

\subsection{Training-Free Enrollment of Unseen Classes}
\label{sec:04_method_f}

CALIPER enrolls an unseen fine-grained class without retraining, provided it belongs to a coarse category already covered by the detector. A small set of annotated crops from \textsc{SI10} spanning multiple viewpoints is identically preprocessed and embedded by the frozen backbone and trained head to form the new support pool.

The same enrollment images also define a \emph{live size profile}. Their detector masks and aligned depth are processed using the inference-time metric measurement pipeline, and the resulting physical extents are used to estimate $\pi_c$. The new class then participates directly in the same $|\mathcal{C}|$-way appearance scoring and margin-conditioned fusion as the existing inventory, with no class-specific optimization.
\section{Experiments And Discussion}
\label{sec:05_experiments}

\subsection{Experimental Setup and Baselines}
\label{sec:05_experimental_setup}

We evaluate CALIPER and competing methods on the reserved \textsc{Test} set. CALIPER first localizes objects using the coarse YOLOv8n-seg model of Sec.~\ref{sec:04_method_b}, followed by fine-grained recognition. Detections are greedily matched in decreasing confidence order to at most one unmatched ground-truth instance by class-agnostic matching at IoU $\geq 0.5$. The detector localizes $557/558$ instances, yielding $99.8\%$ localization recall.

Fine-grained \emph{accuracy} is computed only over matched detections, separating identification from localization. In the 18-class setting, we also report accuracy on the 16 original classes (\textit{Training16}) and the two enrolled unseen classes (\textit{Unseen2}). Support-dependent results are averaged over $100$ random draws for $K\in\{1,5,10\}$.

For selected pairwise comparisons, we use McNemar's exact test~\cite{McNemar_1947,dietterich1998} on the same matched detections and support draw. Let $b$ count detections correctly classified by CALIPER but not the compared method, and $c$ the reverse. Under the null hypothesis that neither method is more likely to be correct on their disagreements, the test returns a two-sided $p$-value. Smaller $p$ provides stronger evidence of a systematic accuracy difference; $p<0.05$ is treated as statistically significant. Reported $b$, $c$, and $p$ use the first support draw, and the same qualitative conclusions hold over the first ten draws.

We compare against four baselines. \emph{One-stage YOLO} fine-tunes a pretrained YOLOv8n-seg model directly on fine-grained identities and must be retrained after new-class enrollment. The remaining baselines share CALIPER's coarse detector and frozen DINOv2 backbone. \emph{Raw DINOv2} performs CNOS-style~\cite{nguyen2023cnos} max-over-support matching in the frozen feature space. \emph{Linear classifier} trains multinomial logistic regression on frozen DINOv2 features and requires retraining after new class enrollment. It is evaluated over five random $K=1$ draws and two $K=5$ partitions, while $K=10$ uses all enrollment images. \emph{ProtoNet} uses episodic training with mean-prototype classification as a standard few-shot alternative to CALIPER's max-over-support scoring. Unless stated otherwise, the baselines do not use metric fusion.

\begin{table}[!b]
% \vspace{-5 mm}
\centering
\caption{Closed-set accuracy (\%) on the 16 training classes. Support-based results are averaged over $100$ random draws. The default CALIPER configuration is marked in bold font.}
% \vspace{-2 mm}
\label{tab:closedset}
\small
\begin{threeparttable}
\begin{tabular*}{0.51\textwidth}{@{\extracolsep{\fill}} lccc}
\toprule
Method & $K=1$ & $K=5$ & $K=10$ \\
\midrule
One-stage YOLO\tnote{*}        & 63.4 & 63.4 & 63.4 \\
Raw DINOv2                     & 39.7 & 71.9 & 75.8 \\
Linear classifier\tnote{*}          & 85.3 & 85.3 & 85.3 \\
ProtoNet                       & 87.3 & 88.3 & 87.6 \\
CALIPER, rendered support      & 87.9 & 87.9 & 88.2 \\
CALIPER, real support          & 88.2 & 87.4 & 88.2 \\
\textbf{CALIPER, rendered+real}
                               & \textbf{88.1} & \textbf{87.8} & \textbf{88.2} \\
\bottomrule
\end{tabular*}
\begin{tablenotes}[flushleft]\footnotesize
\item[*] Trained once on all available training data; these methods do not use a $K$-shot support pool, so their accuracy does not depend on $K$.
\end{tablenotes}
\end{threeparttable}
\end{table}

\begin{table}[t]
    % \vspace{1.5 mm}
    \centering
    \caption{Accuracy (\%) after enrolling two unseen classes. Support-based results are averaged over $100$ random support draws. For methods sharing CALIPER's coarse detector, \emph{Overall}, \emph{Training16}, and \emph{Unseen2} contain $557$, $497$, and $60$ matched detections, respectively. \emph{One-stage YOLO} is evaluated on detections from its retrained fine-grained detector, trained once per $K$.}
    \label{tab:enrollment}
    % \vspace{-3 mm}
    \small
    \begin{tabular}{l ccc ccc ccc}
    
    \toprule
    & \multicolumn{3}{c}{Overall}
    & \multicolumn{3}{c}{Training16}
    & \multicolumn{3}{c}{Unseen2} \\
    \cmidrule(lr){2-4}
    \cmidrule(lr){5-7}
    \cmidrule(lr){8-10}
    
    Method
    & $K=1$ & $K=5$ & $K=10$    & $K=1$ & $K=5$ & $K=10$   & $K=1$ & $K=5$ & $K=10$ \\
    \midrule
    
    One-stage YOLO
    & 61.4 & 53.8 & 55.1        & 63.6 & 53.4 & 53.4        & 42.9 & 56.9 & 68.4 \\    
    Raw DINOv2
    & 37.2 & 64.1 & 67.5        & 36.3 & 66.2 & 69.7        & 44.2 & 47.4 & 49.4 \\    
    Linear classifier
    & 71.5 & 78.6 & 78.8        & 77.6 & 83.0 & 81.1        & 20.3 & 41.7 & 60.0 \\    
    ProtoNet
    & 79.1 & 80.8 & 81.1        & 85.3 & 87.0 & 86.3        & 27.7 & 29.8 & 38.0 \\    
    % CALIPER, rendered support
    % & 83.4 & 85.5 & 85.6        & 87.1 & 86.8 & 86.6        & 52.9 & 75.1 & 77.3 \\    
    \textbf{CALIPER}
    & \textbf{83.3} & \textbf{85.1} & \textbf{85.7} & \textbf{87.3} & \textbf{86.6} & \textbf{86.9} & \textbf{50.0} & \textbf{72.5} & \textbf{75.3} \\
    
    \bottomrule
    \end{tabular}
    % \vspace{-6 mm}
\end{table}

\subsection{Closed-Set Recognition}
\label{sec:closed_set}

Table~\ref{tab:closedset} evaluates recognition on the 16 training classes. \emph{One-stage YOLO} reaches only $63.4\%$, showing that limited data are insufficient for directly fine-tuning the detector to fine-grained identities. \emph{Raw DINOv2} improves with additional support but remains substantially weaker, reaching $75.8\%$ at $K=10$. The \emph{Linear classifier} performs considerably better, reaching $85.3\%$. CALIPER significantly outperforms the \emph{Linear classifier} at all three support budgets (McNemar, $p<0.05$ for each) and performs comparably to \emph{ProtoNet}. Across CALIPER variants, rendered-only, real-only, and mixed support differ by at most $0.5$ pp, showing that the sim-to-real head makes rendered onboarding views effective support for real detections.

\subsection{Enrollment of Unseen Classes}
\label{sec:enrollment}

We next enroll \texttt{screw\_h} and \texttt{screw\_i}, which are absent from both detector and embedding-head training. Their appearance support and live size profiles are obtained solely from \textsc{SI10}, while the Training16 support pools remain unchanged. The results are reported in Table~\ref{tab:enrollment}.

CALIPER achieves the highest overall accuracy at every support budget, reaching $85.7\%$ at $K=10$ compared with $81.1\%$ for \emph{ProtoNet}. The advantage is concentrated on Unseen2: CALIPER reaches $72.5\%$ and $75.3\%$ on Unseen2 at $K=5$ and $10$, while \emph{ProtoNet} reaches $29.8\%$ and $38.0\%$, respectively. Their overall difference is statistically significant at every $K$, as is the Unseen2 difference for $K\geq5$, consistent with the small $p$-values in Table~\ref{tab:mcnemar_enroll}. The higher Training16 $p$-values provide no evidence of a difference.

\begin{table}[!b]
    % \vspace{-4 mm}
    \centering
    \caption{McNemar's exact test ($b$, $c$, $p$) for CALIPER against \emph{ProtoNet} and the \emph{Linear classifier} in the 18-class setting of Table~\ref{tab:enrollment}.}
    % \vspace{-3 mm}
    \label{tab:mcnemar_enroll}

    \small

    \begin{tabular}{l ccc ccc ccc}
    \toprule

    & \multicolumn{3}{c}{$K=1$}
    & \multicolumn{3}{c}{$K=5$}
    & \multicolumn{3}{c}{$K=10$} \\
    \cmidrule(lr){2-4}
    \cmidrule(lr){5-7}
    \cmidrule(lr){8-10}

    Comparison
    & $b$ & $c$ & $p$           & $b$ & $c$ & $p$           & $b$ & $c$ & $p$ \\
    \midrule

    \multicolumn{10}{l}{CALIPER vs Linear classifier} \\
    \quad Overall
    & $94$ & $24$ & $<0.001$    & $64$ & $23$ & $<0.001$    & $62$ & $22$ & $<0.001$ \\
    \quad Training16
    & $73$ & $19$ & $<0.001$    & $40$ & $16$ & $0.002$      & $45$ & $14$ & $<0.001$ \\
    \quad Unseen2
    & $21$ & $5$ & $0.002$      & $24$ & $7$ & $0.003$      & $17$ & $8$ & $0.108$ \\
    \midrule
    
    \multicolumn{10}{l}{CALIPER vs ProtoNet} \\
    \quad Overall
    & $22$ & $8$ & $0.016$      & $47$ & $13$ & $<0.001$    & $42$ & $14$ & $<0.001$ \\
    \quad Training16
    & $13$ & $5$ & $0.096$      & $12$ & $11$ & $1.00$      & $15$ & $10$ & $0.424$ \\
    \quad Unseen2
    & $9$ & $3$ & $0.146$       & $35$ & $2$ & $<0.001$     & $27$ & $4$ & $<0.001$ \\
    \bottomrule
    \end{tabular}
\end{table}

CALIPER also substantially outperforms \emph{Raw DINOv2} across Overall, Training16, and Unseen2. Against the \emph{Linear classifier}, CALIPER is significantly better overall at every $K$ ($p < 0.001$) and on Unseen2 at $K=1$ and $K=5$ ($p \le 0.003$).

Retraining is less stable. At $K=10$, \emph{One-stage YOLO} reaches $68.4\%$ on Unseen2 but only $53.4\%$ on Training16, and $94.1\%$ localization recall. CALIPER instead adds the unseen classes without retraining any network parameters.

\begin{table}[htbp]
% \vspace{1.5 mm}
\centering
\caption{Ablation of margin-conditioned metric fusion. Accuracy (\%) is reported with and without metric fusion for the 16-class closed-set and 18-class enrollment settings, averaged over $100$ random support draws; $\Delta$ denotes the change due to fusion.}
% \vspace{-3 mm}
\label{tab:gate}
\small
\setlength{\tabcolsep}{3pt}
\begin{tabular}{@{}l ccc ccc ccc ccc@{}}
\toprule
& \multicolumn{3}{c}{16-class Training16}
& \multicolumn{3}{c}{18-class Overall}
& \multicolumn{3}{c}{18-class Training16}
& \multicolumn{3}{c}{18-class Unseen2} \\
\cmidrule(lr){2-4}
\cmidrule(lr){5-7}
\cmidrule(lr){8-10}
\cmidrule(lr){11-13}
Method
& $K=1$ & $K=5$ & $K=10$
& $K=1$ & $K=5$ & $K=10$
& $K=1$ & $K=5$ & $K=10$
& $K=1$ & $K=5$ & $K=10$ \\
\midrule
No metric fusion
& 87.1 & 87.3 & 88.3
& 78.5 & 80.4 & 81.2
& 84.8 & 85.5 & 86.5
& 26.8 & 38.3 & 37.9 \\

With metric fusion
& 88.1 & 87.8 & 88.2
& 83.3 & 85.1 & 85.7
& 87.3 & 86.6 & 86.9
& 50.0 & 72.5 & 75.3 \\

$\Delta$
& +1.0 & +0.5 & $-0.1$
& +4.8 & +4.7 & +4.5
& +2.5 & +1.1 & +0.4
& +23.2 & +34.2 & +37.4 \\
\bottomrule
\end{tabular}
% \vspace{-7 mm}
\end{table}

We next quantify the cost of expanding the deployed inventory. We score the same $497$ detections before (16-way, Table~\ref{tab:closedset}) and after enrolling the two probes (18-way, Table~\ref{tab:enrollment}), so the only change is competition from the new classes. As shown in Fig.~\ref{fig:enrollment_cost}, Training16 accuracy decreases by only $0.8$, $1.2$, and $1.3$ pp at $K=1$, $5$, and $10$, while Unseen2 accuracy increases from $50.0\%$ at $K=1$ to $75.3\%$ at $K=10$.

The decrease arises due to \emph{interference}, the number of Training16 detections for which a newly enrolled class outscores every existing class in the appearance-channel. This occurs for an average of $13.9$, $10.6$, and $13.1$ of the $497$ Training16 detections at $K\in\{1,5,10\}$, respectively. Margin-conditioned fusion returns part of these to their true class, leaving only $5.5$, $7.8$, and $9.3$ detections assigned to a probe. Because some of these detections were already misclassified before enrollment, the resulting accuracy loss is smaller than the capture count. For a fixed embedding head and existing-class support pools, the complete enrollment banks provide an upper bound on this interference, as formalized below.

\begin{figure}[!b]
    % \vspace{-4 mm}
    \centering
    \includegraphics[width=0.52\textwidth]{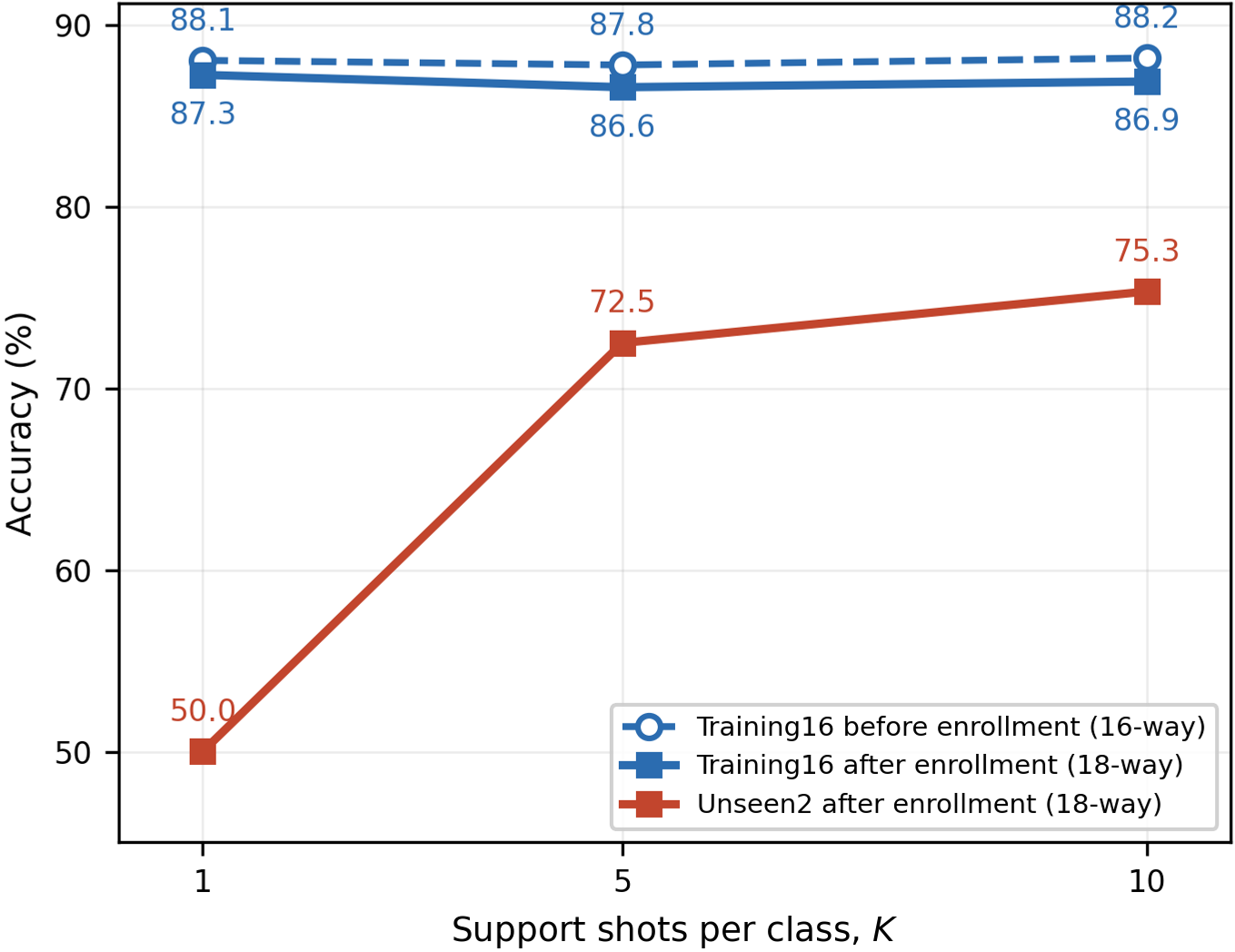}
    % \vspace{-7 mm}
    \caption{Effect of training-free enrollment on existing and unseen classes. Accuracies are averaged over $100$ random support draws for each $K$. \emph{Training16} (blue) is evaluated before enrollment (16-way, dashed) and after adding the two probes (18-way, solid). \emph{Unseen2} (red) reports enrolled-class accuracy in the 18-way setting.}
    \label{fig:enrollment_cost}
\end{figure}

Let $\mathcal{E}$ and $\mathcal{N}$ denote the existing and newly enrolled class sets, respectively, and let $\mathcal{D}_{\mathcal{E}}$ contain detections belonging to $\mathcal{E}$. Writing $s_c(d; \mathcal{P}_c)$ for the appearance score of Sec.~\ref{sec:04_method_c} using support pool $\mathcal{P}_c$, define the capture set
% \vspace{-1 mm}
\begin{equation*}
    C(\mathcal{P}_\mathcal{N})=\Bigl\{d\in\mathcal{D}_{\mathcal{E}} : \max_{n\in\mathcal{N}} s_n(d;\mathcal{P}_n) > \max_{e\in\mathcal{E}} s_e(d;\mathcal{P}_e)\Bigr\},
    % \vspace{-1 mm}
\end{equation*}
with appearance-channel interference $I(\mathcal{P}_\mathcal{N})=|C(\mathcal{P}_\mathcal{N})|$.

\begin{proposition}[Bounded interference]
    \label{prop:1}
    Fix the embedding and the existing-class support pools. If $\mathcal{P}_n \subseteq \mathcal{B}_n,\ \forall n\in\mathcal{N}$, then $C(\mathcal{P}_\mathcal{N})\subseteq C(\mathcal{B}_\mathcal{N})$ and $I(\mathcal{P}_\mathcal{N}) \leq I(\mathcal{B}_\mathcal{N}),$ where $\mathcal{B}_n$ is class $n$'s complete enrollment bank and $\mathcal{B}_\mathcal{N}=\{\mathcal{B}_n\}_{n\in\mathcal{N}}$.
\end{proposition}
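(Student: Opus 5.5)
The argument is a monotonicity property of the max-over-support score. I would first show that, for a fixed detection $d$ and a fixed new class $n\in\mathcal{N}$, shrinking the support pool cannot increase that class's appearance score. The score is $s_n(d;\mathcal{P}_n)=\max_{p\in\mathcal{P}_n}\langle g_\phi(f(x_d)),p\rangle$, and the embedding $g_\phi\circ f$ is held fixed. A maximum over a subset of a finite set is at most the maximum over the whole set. Hence $\mathcal{P}_n\subseteq\mathcal{B}_n$ gives $s_n(d;\mathcal{P}_n)\le s_n(d;\mathcal{B}_n)$ for every $d$. The pools are finite, and they are nonempty whenever $n$ is enrolled, so both maxima are attained and well defined.

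Next I would pass to the maximum over new classes. Taking $\max_{n\in\mathcal{N}}$ of both sides preserves the inequality, since maxima are monotone in each argument:
\begin{equation*}
\max_{n\in\mathcal{N}} s_n(d;\mathcal{P}_n)\le \max_{n\in\mathcal{N}} s_n(d;\mathcal{B}_n).
\end{equation*}
The right-hand side of the capture condition is $\max_{e\in\mathcal{E}} s_e(d;\mathcal{P}_e)$. It depends only on the existing-class pools and the embedding, both fixed by hypothesis, so it is identical in $C(\mathcal{P}_\mathcal{N})$ and in $C(\mathcal{B}_\mathcal{N})$. The candidate set $\mathcal{D}_\mathcal{E}$ is also the same in both.

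The set inclusion then follows directly. Take any $d\in C(\mathcal{P}_\mathcal{N})$. Chaining the displayed inequality with the strict capture inequality gives $\max_{n} s_n(d;\mathcal{B}_n)\ge\max_n s_n(d;\mathcal{P}_n)>\max_e s_e(d;\mathcal{P}_e)$, so $d\in C(\mathcal{B}_\mathcal{N})$. The cardinality bound $I(\mathcal{P}_\mathcal{N})\le I(\mathcal{B}_\mathcal{N})$ follows because $\mathcal{D}_\mathcal{E}$ is finite and cardinality is monotone under inclusion.

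There is no real obstacle in this argument. The only step that needs care is confirming that the hypotheses genuinely freeze every other quantity. The embedding head $g_\phi$ is not retrained during enrollment (Sec.~\ref{sec:04_method_f}), and the existing-class pools $\mathcal{P}_e$ are unchanged. Without these, the threshold on the right side of the capture condition could move and the monotonicity would break. I would also remark that the statement concerns the appearance channel only. Metric fusion can return captured detections to their true class, but it does not enter the definition of $C$, so the bound holds regardless of $\lambda$ and $\theta$.
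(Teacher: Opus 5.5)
Your proof is correct and matches the paper's argument: the max-over-support score is monotone under $\mathcal{P}_n\subseteq\mathcal{B}_n$, and the existing-class threshold $\max_{e\in\mathcal{E}} s_e(d;\mathcal{P}_e)$ is held fixed, which gives $C(\mathcal{P}_\mathcal{N})\subseteq C(\mathcal{B}_\mathcal{N})$ and hence the cardinality bound. The only cosmetic difference is that you take the maximum over $n\in\mathcal{N}$ on both sides, whereas the paper fixes a single witness class $n$; this changes nothing.
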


\begin{proof}
    For $d\in C(\mathcal{P}_\mathcal{N})$, there is an $n$ such that $s_n(d;\mathcal{P}_n) > \max_{e\in\mathcal{E}} s_e(d;\mathcal{P}_e)$. Since $\mathcal{P}_n\subseteq\mathcal{B}_n$, $s_n(d;\mathcal{P}_n)\leq s_n(d;\mathcal{B}_n)$ for all $n \in \mathcal{N}$, so $d\in C(\mathcal{B}_\mathcal{N})$. Thus, $C(\mathcal{P}_\mathcal{N})\subseteq C(\mathcal{B}_\mathcal{N})$. Taking cardinalities gives $I(\mathcal{P}_\mathcal{N}) \leq I(\mathcal{B}_\mathcal{N})$; with equality when $\mathcal{P}_n=\mathcal{B}_n$ for all $n\in\mathcal{N}$.
\end{proof}

With the complete \textsc{SI10} banks, $I(\mathcal{B}_\mathcal{N})$ is $32.7$, $14.0$, and $13.1$ captures at $K\in\{1,5,10\}$, against the observed $13.9$, $10.6$, and $13.1$, with the bound attained at $K=10$.

\subsection{Effect of Metric Fusion}
\label{sec:metric_fusion_exp}

Table~\ref{tab:gate} isolates the effect of margin-conditioned metric fusion. In the 16-class closed set, fusion changes accuracy by only $-0.1$ to $+1.0$ pp. After enrollment, it improves Training16 by $+0.4$ to $+2.5$ pp, while producing much larger gains of $+23.2$ to $+37.4$ pp on Unseen2 and $+4.5$ to $+4.8$ pp overall. The overall gain is statistically significant at every $K$ (McNemar $p\leq0.004$), and the Unseen2 gain is significant for $K\geq5$ (McNemar $p<0.001$); no statistically significant degradation is observed on the original 16 classes. Thus, margin-conditioned fusion concentrates its benefit on the size-distinct ambiguous probes while preserving performance on the original inventory.

Table~\ref{tab:probeconf} shows the mechanism at $K=10$. Without metric fusion, the two probes are confused with one another almost as often as they are correctly identified. Fusion eliminates the \texttt{screw\_h}$\rightarrow$\texttt{screw\_i} confusion and substantially reduces the reverse confusion. This confirms that metric size can recover identity information suppressed by crop normalization.

The observed confusions also agree with the separability analysis of Sec.~\ref{sec:03_dataset_a}, which uses no \textsc{Test} data. Among the $116$ ordered within-category class pairs, all $18$ directions with at least $5\%$ confusion have $S<20$, whereas pairs with $S\geq20$ account for only $3\%$ of the total off-diagonal confusion. Fusion resolves the probe-pair ambiguity because their metric sizes are distinct; residual errors occur mainly where both appearance and measured-size distributions overlap.

\begin{figure*}[t]
    % \vspace{1.5 mm}
    \centering
    \includegraphics[width=\linewidth]{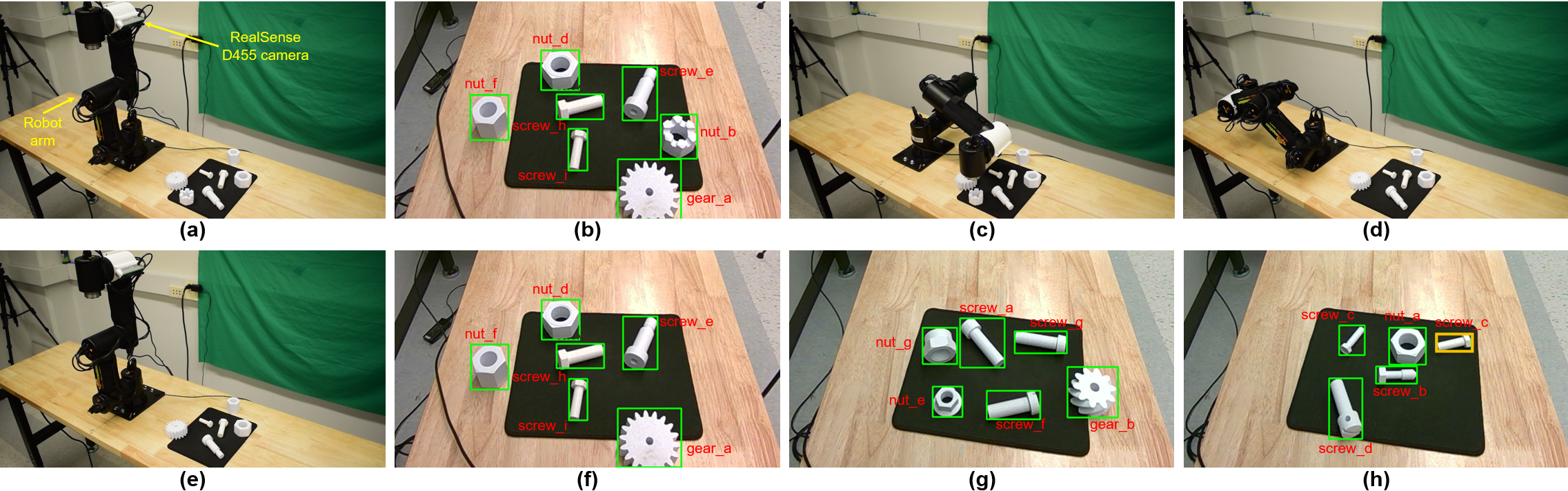}
    % \vspace{-8 mm}
    \caption{Robot-arm deployment. (a,b) A seven-part scene and classification. (c,d) Motion above \texttt{nut\_b} and to the predefined drop pose; no gripper is used. (e,f) Reclassification after one object is removed. (g,h) Additional scenes; (h) shows the only error (orange bounding box), \texttt{screw\_h}$\rightarrow$\texttt{screw\_c}. All panels use frozen $K=10$ with no deployment-specific retraining or recalibration.}
    \label{fig:deployment}
    % \vspace{-6 mm}
\end{figure*}

\subsection{Deployment on a Robot Arm}
\label{sec:deployment}

We deploy CALIPER on a $6$-DoF robot arm with a wrist-mounted RealSense D455 camera in a closed perception-motion loop. At each cycle, the arm returns to a home pose, CALIPER classifies the scene, and a selected detection is back-projected into the arm frame. The arm then moves above the target and to a predefined drop pose before returning home. No gripper is used; the experiment therefore evaluates recognition and perception-to-motion integration rather than grasping performance.

Deployment uses the same frozen $K=10$ configuration as Table~\ref{tab:enrollment}, with both probes enrolled solely from \textsc{SI10}, and the fusion parameters of Sec.~\ref{sec:04_method_e}. No model parameters, thresholds, or size profiles are recalibrated for the robot.

Fig.~\ref{fig:deployment} (b, g, h) shows three scene compositions with seven, six, and five parts, so that each of the $18$ inventory classes appears exactly once; CALIPER correctly identifies $17/18$. All seven parts in the initial scene, including both enrollment probes, are correctly recognized, and the remaining six are again correct after one object is removed. The only error is \texttt{screw\_h}$\rightarrow$\texttt{screw\_c} in Fig.~\ref{fig:deployment} (h), consistent with the residual confusion in Table~\ref{tab:probeconf}, where the two classes remain ambiguous in both appearance and size.

\begin{table}[h]
\centering
% \vspace{-2 mm}
\caption{Row-normalized confusion (\%) for the two enrolled classes at $K=10$ in the 18-class setting, with and without fusion.}
% \vspace{-3 mm}
\label{tab:probeconf}
\small
\setlength{\tabcolsep}{3.5pt}
\begin{tabular}{lcccc}
\toprule
True class
& \texttt{screw\_h}
& \texttt{screw\_i}
& \texttt{screw\_c}
& Other \\
\midrule
\texttt{screw\_h}, no fusion
& 42.5 & 31.3 & 22.9 & 3.3 \\

\texttt{screw\_i}, no fusion
& 39.9 & 33.3 & 10.9 & 15.9 \\

\texttt{screw\_h}, fused
& \textbf{82.0} & \textbf{0.0} & 14.7 & 3.3 \\

\texttt{screw\_i}, fused
& 16.7 & \textbf{68.7} & 0.0 & 14.6 \\
\bottomrule
\end{tabular}
% \vspace{-4 mm}
\end{table}

\section{Conclusion And Future Work}
\label{sec:06_conclusion}

CALIPER addresses evolving inventories where visually similar parts must be distinguished from limited data without CAD models. By coupling appearance support with metric size profiles, new classes can be enrolled without retraining while physical size is used only when appearance is ambiguous. This is relevant to industrial sorting, assembly, and retrieval where similar variants are not interchangeable.

The current formulation requires new classes to belong to coarse categories already covered by the detector and remains limited when classes overlap in both appearance and size. Future work will target new coarse categories, richer geometric cues, and larger real industrial inventories.

\bibliographystyle{unsrt}  
\bibliography{references}

@inproceedings{NIPS2015_14bfa6bb,
 author = {Ren, Shaoqing and He, Kaiming and Girshick, Ross and Sun, Jian},
 booktitle = {Advances in Neural Information Processing Systems},
 editor = {},
 pages = {},
 publisher = {},
 title = {{Faster R-CNN: Towards Real-Time Object Detection with Region Proposal Networks}},
 volume = {28},
 year = {2015}
}

@misc{yolov8_ultralytics,
  author       = {Glenn Jocher and Ayush Chaurasia and Jing Qiu},
  title        = {Ultralytics {YOLOv8}},
  year         = {2023},
  howpublished = {\url{https://github.com/ultralytics/ultralytics}},
  note         = {{A}ccessed: Sep. 2, 2026}
}

@inproceedings{ravi2025sam,
title={{SAM 2: Segment Anything in Images and Videos}},
author={Nikhila Ravi and Valentin Gabeur and Yuan-Ting Hu and Ronghang Hu and Chaitanya Ryali and Tengyu Ma and Haitham Khedr and Roman R{\"a}dle and Chloe Rolland and Laura Gustafson and Eric Mintun and Junting Pan and Kalyan Vasudev Alwala and Nicolas Carion and Chao-Yuan Wu and Ross Girshick and Piotr Dollar and Christoph Feichtenhofer},
booktitle={The Thirteenth International Conference on Learning Representations},
year={2025},
}

@inproceedings{nguyen2023cnos,
    title     = {{CNOS: A Strong Baseline for CAD-based Novel Object Segmentation}},
    author    = {Nguyen, Van Nguyen and Groueix, Thibault and Ponimatkin, Georgy and Lepetit, Vincent and Hodan, Tomas},
    booktitle = {Proceedings of the IEEE/CVF ICCV},
    pages     = {2134--2140},
    year      = {2023}
  }

@Article{KKLD23,
  author       = "Kerbl, Bernhard and Kopanas, Georgios and Leimkühler, Thomas and Drettakis, George",
  title        = "3D Gaussian Splatting for Real-Time Radiance Field Rendering",
  journal      = "ACM Transactions on Graphics (SIGGRAPH Conference Proceedings)",
  number       = "",
  volume       = "42",
  month        = "July",
  year         = "2023",
}

@misc{nguyen2025bopchallenge2024modelbased,
      title={{BOP Challenge 2024 on Model-Based and Model-Free 6D Object Pose Estimation}}, 
      author={Van Nguyen Nguyen and Stephen Tyree and Andrew Guo and Mederic Fourmy and Anas Gouda and Taeyeop Lee and Sungphill Moon and Hyeontae Son and Lukas Ranftl and Jonathan Tremblay and Eric Brachmann and Bertram Drost and Vincent Lepetit and Carsten Rother and Stan Birchfield and Jiri Matas and Yann Labbe and Martin Sundermeyer and Tomas Hodan},
      year={2025},
      eprint={2504.02812},
      archivePrefix={arXiv},
      primaryClass={cs.CV},
}

@inproceedings{schoenberger2016sfm,
    author={Sch\"{o}nberger, Johannes Lutz and Frahm, Jan-Michael},
    title={{Structure-from-Motion Revisited}},
    booktitle={Conference on Computer Vision and Pattern Recognition2016},
}

@InProceedings{Ghiasi_2021_CVPR,
    author    = {Ghiasi, Golnaz and Cui, Yin and Srinivas, Aravind and Qian, Rui and Lin, Tsung-Yi and Cubuk, Ekin D. and Le, Quoc V. and Zoph, Barret},
    title     = {{Simple Copy-Paste Is a Strong Data Augmentation Method for Instance Segmentation}},
    booktitle = {Proceedings of the IEEE/CVF Conference on Computer Vision and Pattern Recognition},
    year      = {2021},
}

@ARTICLE{wei2022finegrained,
author={Wei, Xiu-Shen and Song, Yi-Zhe and Aodha, Oisin Mac and Wu, Jianxin and Peng, Yuxin and Tang, Jinhui and Yang, Jian and Belongie, Serge},
journal={ IEEE Transactions on Pattern Analysis \& Machine Intelligence },
title={{ Fine-Grained Image Analysis With Deep Learning: A Survey }},
year={2022},
volume={44},
number={12},
ISSN={1939-3539},
pages={8927-8948},
doi={10.1109/TPAMI.2021.3126648},
publisher={IEEE Computer Society},
address={Los Alamitos, CA, USA},
month=dec}

@article{radford2021clip,
  title={{Learning Transferable Visual Models From Natural Language Supervision}},
  author={Alec Radford and Jong Wook Kim and Chris Hallacy and Aditya Ramesh and Gabriel Goh and Sandhini Agarwal and Girish Sastry and Amanda Askell and Pamela Mishkin and Jack Clark and Gretchen Krueger and Ilya Sutskever},
  journal={ArXiv},
  year={2021},
  volume={abs/2103.00020},
}

@inproceedings{minderer2022owlvit,
author = {Minderer, Matthias and Gritsenko, Alexey and Stone, Austin and Neumann, Maxim and Weissenborn, Dirk and Dosovitskiy, Alexey and Mahendran, Aravindh and Arnab, Anurag and Dehghani, Mostafa and Shen, Zhuoran and Wang, Xiao and Zhai, Xiaohua and Kipf, Thomas and Houlsby, Neil},
title = {{Simple Open-Vocabulary Object Detection}},
year = {2022},
isbn = {978-3-031-20079-3},
publisher = {},
address = {},
doi = {10.1007/978-3-031-20080-9_42},
booktitle = {Computer Vision – ECCV 2022},
pages = {},
numpages = {},
location = {Tel Aviv, Israel}
}

@InProceedings{liu2024groundingdino,
author="Liu, Shilong
and Zeng, Zhaoyang
and Ren, Tianhe
and Li, Feng
and Zhang, Hao
and Yang, Jie
and Jiang, Qing
and Li, Chunyuan
and Yang, Jianwei
and Su, Hang
and Zhu, Jun
and Zhang, Lei",
title="{Grounding DINO: Marrying DINO with Grounded Pre-training for Open-Set Object Detection}",
booktitle="Computer Vision -- ECCV 2024",
year="2025",
isbn="978-3-031-72970-6"
}

@article{
oquab2024dinov2,
title={{DINOv2: Learning Robust Visual Features without Supervision}},
author={Maxime Oquab and Timoth{\'e}e Darcet and Th{\'e}o Moutakanni and Huy V. Vo and Marc Szafraniec and Vasil Khalidov and Pierre Fernandez and Daniel HAZIZA and Francisco Massa and Alaaeldin El-Nouby and Mido Assran and Nicolas Ballas and Wojciech Galuba and Russell Howes and Po-Yao Huang and Shang-Wen Li and Ishan Misra and Michael Rabbat and Vasu Sharma and Gabriel Synnaeve and Hu Xu and Herve Jegou and Julien Mairal and Patrick Labatut and Armand Joulin and Piotr Bojanowski},
journal={Transactions on Machine Learning Research},
issn={2835-8856},
year={2024},
note={Featured Certification}
}

@inproceedings{vinyals2016matching,
 author = {Vinyals, Oriol and Blundell, Charles and Lillicrap, Timothy and kavukcuoglu, koray and Wierstra, Daan},
 booktitle = {Advances in Neural Information Processing Systems},
 editor = {},
 pages = {},
 publisher = {Curran Associates, Inc.},
 title = {{Matching Networks for One Shot Learning}},
 volume = {},
 year = {2016}
}

@inproceedings{snell2017prototypical,
	title = {Prototypical {Networks} for {Few}-shot {Learning}},
	volume = {30},
	urldate = {2026-08-17},
	booktitle = {Advances in {Neural} {Information} {Processing} {Systems}},
	publisher = {Curran Associates, Inc.},
	author = {Snell, Jake and Swersky, Kevin and Zemel, Richard},
	year = {2017},
}

@inproceedings{finn2017maml,
author = {Finn, Chelsea and Abbeel, Pieter and Levine, Sergey},
title = {Model-agnostic meta-learning for fast adaptation of deep networks},
year = {2017},
booktitle = {Proceedings of the 34th International Conference on Machine Learning - Volume 70},
}

@inproceedings{
chen2019closerlook,
title={{A Closer Look at Few-shot Classification}},
author={Wei-Yu Chen and Yen-Cheng Liu and Zsolt Kira and Yu-Chiang Frank Wang and Jia-Bin Huang},
booktitle={International Conference on Learning Representations},
year={2019},
}

@misc{tian2020rethinking,
title={Rethinking Few-Shot Image Classification: a Good Embedding Is All You Need?},
ISBN={9783030586102},
abstractNote={The focus of recent meta-learning research has been onthe  development  of  learning  algorithms  that  can  quicklyadapt to test time tasks with limited data and low compu-tational cost.   Few-shot learning is widely used as one ofthe standard benchmarks in meta-learning. In this work, weshow that a simple baseline: learning a supervised or self-supervised representation on the meta-training set, followedby training a linear classifier on top of this representation,outperforms state-of-the-art few-shot learning methods. Anadditional boost can be achieved through the use of self-distillation.   This demonstrates that using a good learnedembedding model can be more effective than sophisticatedmeta-learning algorithms. We believe that our findings mo-tivate a rethinking of few-shot image classification bench-marks and the associated role of meta-learning algorithms.Code is available at:http://github.com/WangYueFt/rfs/.},
school={Springer},
author={Tian, Yonglong and Wang, Yue and Tenenbaum, Joshua B and Isola, Phillip John},
year={2020},
}

@InProceedings{wang2020tfa,
  title = 	 {{Frustratingly Simple Few-Shot Object Detection}},
  author =       {Wang, Xin and Huang, Thomas and Gonzalez, Joseph and Darrell, Trevor and Yu, Fisher},
  booktitle = 	 {Proceedings of the 37th International Conference on Machine Learning},
  year = 	 {2020},
  publisher =    {PMLR}
}

@INPROCEEDINGS{qiao2021defrcn,
  author={Qiao, Limeng and Zhao, Yuxuan and Li, Zhiyuan and Qiu, Xi and Wu, Jianan and Zhang, Chi},
  booktitle={2021 IEEE/CVF International Conference on Computer Vision (ICCV)}, 
  title={{DeFRCN: Decoupled Faster R-CNN for Few-Shot Object Detection}},
  doi={10.1109/ICCV48922.2021.00856}}

@InProceedings{Perez-Rua_2020_CVPR,
author = {Perez-Rua, Juan-Manuel and Zhu, Xiatian and Hospedales, Timothy and Xiang, Tao},
title = {{Incremental Few-Shot Object Detection}},
booktitle = {Proceedings of the IEEE/CVF Conference on Computer Vision and Pattern Recognition (CVPR)},
year = {2020}
}

@InProceedings{Lin_2024_CVPR,
    author    = {Lin, Jiehong and Liu, Lihua and Lu, Dekun and Jia, Kui},
    title     = {{SAM-6D: Segment Anything Model Meets Zero-Shot 6D Object Pose Estimation}},
    booktitle = {Proceedings of the IEEE/CVF Conference on Computer Vision and Pattern Recognition (CVPR)},
    month     = {June},
    year      = {2024},
    pages     = {27906-27916}
}

@InProceedings{Nguyen_2024_CVPR,
    author    = {Nguyen, Van Nguyen and Groueix, Thibault and Salzmann, Mathieu and Lepetit, Vincent},
    title     = {{GigaPose: Fast and Robust Novel Object Pose Estimation via One Correspondence}},
    booktitle = {Proceedings of the IEEE/CVF Conference on Computer Vision and Pattern Recognition (CVPR)},
    month     = {June},
    year      = {2024},
    pages     = {9903-9913}
}

@InProceedings{ornek2024foundpose,
author="{\"O}rnek, Evin P{\i}nar
and Labb{\'e}, Yann
and Tekin, Bugra
and Ma, Lingni
and Keskin, Cem
and Forster, Christian
and Hodan, Tomas",
title="{FoundPose: Unseen Object Pose Estimation with Foundation Features}",
booktitle="Computer Vision -- ECCV 2024",
year="2025",
isbn="978-3-031-73347-5"
}

@misc{sun2022onepose,
      title={OnePose: One-Shot Object Pose Estimation without CAD Models}, 
      author={Jiaming Sun and Zihao Wang and Siyu Zhang and Xingyi He and Hongcheng Zhao and Guofeng Zhang and Xiaowei Zhou},
      year={2022},
      eprint={2205.12257},
      archivePrefix={arXiv},
      primaryClass={cs.CV},
}

@inproceedings{liu2022gen6d,
  title={{Gen6D: Generalizable Model-Free 6-DoF Object Pose Estimation from RGB Images}},
  author={Liu, Yuan and Wen, Yilin and Peng, Sida and Lin, Cheng and Long, Xiaoxiao and Komura, Taku and Wang, Wenping},
  booktitle={ECCV},
  year={2022}
}

@inproceedings{Mildenhall_2020_ECCV_NeRF,
booktitle = {Proceedings of the European Conference on Computer Vision (ECCV)},
author = {Ben Mildenhall and Pratul P. Srinivasan and Matthew Tancik and Jonathan T. Barron and Ravi Ramamoorthi and Ren Ng},
title = {{NeRF: Representing Scenes as Neural Radiance Fields for View Synthesis}},
year = {2020},
entrytype = {inproceedings},
id = {mildenhall2020nerf}
}

@InProceedings{Gupta_2014_ECCV,
author="Gupta, Saurabh
and Girshick, Ross
and Arbel{\'a}ez, Pablo
and Malik, Jitendra",
title="{Learning Rich Features from {RGB-D} Images for Object Detection and Segmentation}",
booktitle="ECCV 2014",
}

@INPROCEEDINGS{Eitel_2015_IROS,
  author = {Andreas Eitel and Jost Tobias Springenberg and Luciano Spinello and Martin Riedmiller and Wolfram Burgard},
  title = {{Multimodal Deep Learning for Robust {RGB-D} Object Recognition}},
  booktitle = {IEEE/RSJ International Conference on Intelligent Robots and Systems (IROS)},
  year = 2015,
  address = {Hamburg, Germany},
}

@article{Bagherinezhad_2016_AAAI,
title = {{Are Elephants Bigger than Butterflies? Reasoning about Sizes of Objects}},
author = {Hessam Bagherinezhad and Hannaneh Hajishirzi and Yejin Choi and Ali Farhadi},
booktitle = {AAAI},
year = {2016},
}

@inproceedings{Chiatti_2021_AAAI_MAKE,
  author    = {Chiatti, Agnese and Motta, Enrico and Daga, Enrico and Bardaro, Gianluca},
  title     = {{Fit to Measure: Reasoning about Sizes for Robust Object Recognition}},
  booktitle = {Proceedings of the {AAAI} 2021},
  editor    = {Martin, Andreas and Hinkelmann, Knut and Fill, Hans-Georg and Gerber, Aurona and Lenat, Doug and Stolle, Reinhard and van Harmelen, Frank},
  year      = {2021},
}

@article{McNemar_1947,
title={{Note on the Sampling Error of the Difference Between Correlated Proportions or Percentages}},
DOI={10.1007/BF02295996},
journal={Psychometrika},
author={McNemar, Quinn},
year={1947},
}

@article{dietterich1998,
author = {Dietterich, Thomas G.},
title = {Approximate statistical tests for comparing supervised classification learning algorithms},
year = {1998},
publisher = {MIT Press},
address = {Cambridge, MA, USA},
issn = {0899-7667},
doi = {10.1162/089976698300017197},
journal = {Neural Comput.},
month = oct,
numpages = {29}
}

@INPROCEEDINGS{Janoch_2011_ICCV,
  author={Janoch, Allison and Karayev, Sergey and Yangqing Jia and Barron, Jonathan T. and Fritz, Mario and Saenko, Kate and Darrell, Trevor},
  booktitle={2011 IEEE International Conference on Computer Vision Workshops (ICCV Workshops)}, 
  title={{A category-level 3-D object dataset: Putting the Kinect to work}}, 
  year={2011},
  volume={},
  number={},
  pages={1168-1174},
  doi={10.1109/ICCVW.2011.6130382}}

@misc{Xiaofei_2021_Arxiv,
      title={{SizeNet: Object Recognition via Object Real Size-based Convolutional Networks}}, 
      author={Xiaofei Li and Zhong Dong},
      year={2021},
      eprint={2105.06188},
      archivePrefix={arXiv},
      primaryClass={cs.CV},
}

@article{baghooee2026insect,
title = {Insect size matters: Using image and dimensions together improves image classification},
journal = {Ecological Informatics},
volume = {94},
pages = {103664},
year = {2026},
issn = {1574-9541},
doi = {https://doi.org/10.1016/j.ecoinf.2026.103664},
author = {Melika Baghooee and Quentin Geissmann}
}

@misc{tao2026appearancefailsgeometryrecognizes,
      title={{Where Appearance Fails, Geometry Recognizes: A CAD-Free 3D Shape Prior That Complements Vision Foundation Models}}, 
      author={Chenxi Tao and Seung-Kyum Choi},
      year={2026},
      eprint={2609.04381},
      archivePrefix={arXiv},
      primaryClass={cs.CV},
}

\end{document}